\newtheorem{theorem}{Theorem}
\newtheorem{lemma}{Lemma}
\newtheorem{corollary}{Corollary}
\newtheorem{definition}{Definition}
\newcommand{\R}{\mathbb{R}}
\newcommand{\E}{\mathbb{E}}
\newcommand{\I}{\mathbf{1}}
\DeclareMathOperator*{\argmin}{arg\,min}
\def\sU{{\mathsf U}}
\def\sV{{\mathsf V}}
\def\sX{{\mathsf X}}
\def\sY{{\mathsf Y}}
\def\sZ{{\mathsf Z}}
\def\deq{\triangleq}
\def\wh#1{{\widehat{#1}}}
\newcommand\blfootnote[1]{%
	\begingroup
	\renewcommand\thefootnote{}\footnote{#1}%
	\addtocounter{footnote}{-1}%
	\endgroup
}
\title{Dynamic Inference}
\author{Aolin Xu} 
\date{}
\begin{document}
	
	\maketitle
	
	\begin{abstract}
		Traditional statistical estimation, or statistical inference in general, is static, in the sense that the estimate of the quantity of interest does not change the future evolution of the quantity. In some sequential estimation problems however, we encounter the situation where the future values of the quantity to be estimated depend on the estimate of its current value. Examples include stock price prediction by big investors, interactive product recommendation, and behavior prediction in multi-agent systems.
		We may call such problems as \emph{dynamic inference}. In this work, a formulation of this problem under a Bayesian probabilistic framework is given, and the optimal estimation strategy is derived as the solution to minimize the overall inference loss. 
		How the optimal estimation strategy works is illustrated through two examples, stock trend prediction and vehicle behavior prediction.
		When the underlying models for dynamic inference are unknown, we can consider the problem of \emph{learning for dynamic inference}. This learning problem can potentially unify several familiar machine learning problems, including supervised learning, imitation learning, and reinforcement learning.
	\end{abstract}

	\section{Introduction}
	Traditional statistical estimation, or statistical inference in general, is static, in the sense that the estimate of the quantity of interest does not affect the future evolution of the quantity. 
	In some sequential estimation problems however, we encounter the situation where the future values of the quantity to be estimated depend on the estimate of its current value. 
	Examples include: 1) stock price prediction by big investors, where the prediction of today's price of a stock affects today's investment decision, which further changes the stock's supply-demand status and hence its price tomorrow; 2) interactive product recommendation, where the estimate of a customer's preference based on their activity leads to certain product recommendations, which would in turn shape the customer's future activity and preference; 3) behavior prediction in multi-agent systems, e.g.\ predicting the intentions of vehicles on the road adjacent to the ego vehicle, where the prediction of an adjacent vehicle's intention based on its current driving situation leads to a certain action of the ego vehicle, which can change the future driving situation and intention of that adjacent vehicle.

	Broadly speaking, this type of interactive sequential estimation problems arises in any autonomous agent that interacts with a system of interest, through a measurement-inference-action loop. During the interaction, the inference, either estimation or prediction, of a property of the system based on the measurements of its current and past states affects the action to be taken by the autonomous agent, which further influences the future states and properties of the system of interest.
	We may call such problems as \emph{dynamic inference}.

	In Section~\ref{sec:formulation}, a mathematical formulation of dynamic inference is given under a Bayesian probabilistic framework. 
	It is shown in Section~\ref{sec:MDP_sol} that this problem can be converted to a Markov decision-making process (MDP), and the optimal estimation strategy that minimizes the overall inference loss can be derived as the optimal policy of this MDP through dynamic programming. 
	Two examples, stock trend prediction and vehicle behavior prediction, are given in Section~\ref{sec:app} to illustrate how the optimal estimation strategy for dynamic inference works, and how it differs from the solution to the traditional statistical inference.
	
	Section~\ref{sec:learn_di} briefly discusses the problem of \emph{learning for dynamic inference}, which is to address the situation where the underlying probabilistic models for dynamic inference become unknown. Learning for dynamic inference can potentially serve as a unifying meta problem of machine learning, such that supervised learning, imitation learning, and reinforcement learning can be cast as its special instances. 
	Having a good understanding of dynamic inference and its learning extension will thus be helpful in gaining better understandings of a broad spectrum of machine learning problems.
	The formulation of dynamic inference appears to be new, but it can be related to a variety of existing interactive decision-making problems and prediction problems that take the consequence of the prediction into account. Moreover, any MDP may be thought of as a dynamic inference problem. These related problems are discussed in Section~\ref{sec:rel_work}.

	\section{Problem formulation}\label{sec:formulation}
	\subsection{Traditional statistical inference}
	In traditional statistical inference, the goal is to estimate a quantity of interest $Y$ based on an observation $X$ that statistically depends on $Y$. 
	Under the Bayesian formulation, the pair $(X,Y)\in\sX\times\sY$ is modeled as a jointly distributed random vector with distribution $P_{X,Y}$.
	Given a loss function $\ell:\sY\times\wh\sY\rightarrow\R$, the optimal estimator $\psi_{\rm B}$, a.k.a.\ the Bayes estimator, is a map $ \sX\rightarrow\wh\sY$ that achieves the minimum expected loss:
	\begin{align}\label{eq:Bayes_inf}
		\psi_{\rm B} = \argmin_{\psi:\sX\rightarrow\wh\sY} \E[\ell(Y,\psi(X))] .
	\end{align}
	A basic result from estimation theory is that for any $x\in\sX$, the optimal estimate of $Y$ given $X=x$ is a minimizer of the expected posterior loss, i.e.\ $\psi_{\rm B}(x)=\argmin_{\hat y\in\wh\sY}\E[\ell(Y,\hat y)|X=x]$.
	The above statistical inference problem is \emph{static}, in the sense that only one round of estimation is considered. 
	When there is a need to estimate a sequence of quantities $Y^n\deq (Y_1,\ldots,Y_n)$ based on observations $X^n \deq (X_1,\ldots,X_n)$, if the pairs $(X_i,Y_i)$ are i.i.d.\ for $i=1,\ldots,n$, the sequential estimation problem to minimize the accumulated expected loss can be optimally solved by repeatedly using the same single-round optimal estimator $\psi_{\rm B}$.
	
	\subsection{Dynamic inference}
	The problem of $n$-round dynamic inference is to {sequentially} estimate $n$ quantities of interest $Y^n$ based on observations $X^n$, where in each round, the quantity of interest $Y_i$ only depends on the observation $X_i$, while $X_i$ depends on the observation $X_{i-1}$ and the estimate $\wh Y_{i-1}$ of $Y_{i-1}$ in the previous round; the estimate $\wh Y_i$ of $Y_i$ is made potentially based on all the information available so far, namely $(X^{i},Y^{i-1})$, with the goal of minimizing the expected accumulated loss over the $n$ rounds. Here it is assumed that after the $i$th round of estimation, $Y_i$ is revealed to the estimator. It can also happen that $Y^n$ are never revealed during the process, in which case $\wh Y_i$ is estimated only based on $X^{i}$. Nevertheless, it will be shown in Section~\ref{sec:MDP_sol} that an optimal estimation strategy can estimate $Y_i$ only based on the instantaneous observation $X_i$, no matter $Y^{i-1}$ are available or not.
	
	Formally, we assume the knowledge of the distribution $P_{X_1}$ of the initial observation, the probability transition kernel $P{\raisebox{-2pt}{$\scriptstyle X_i|X_{i-1}, \wh Y_{i-1}$}}$ of the $i$th observation given the observation and the estimate in the previous round, $i=2,\ldots,n$, and the probability transition kernel $P_{Y_i|X_i}$ of the $i$th quantity of interest given the $i$th observation, $i=1,\ldots,n$. 
	We may call these two types of probability transition kernels the \emph{observation-transition model} and the \emph{quantity-generation model}, respectively.
	The estimates $\wh Y^n$ are sequentially made according to an \emph{estimation strategy}: 
	\begin{definition}
		An estimation strategy for an $n$-round dynamic inference is a sequence of estimators $\psi^n = (\psi_1,\ldots,\psi_n)$, where $\psi_i:\sX^{i}\times\sY^{i-1}\rightarrow\wh\sY$ is the estimator used in the $i$th round, $i=1,\ldots,n$, which maps the history of observations and revealed quantities of interest $(X^i,Y^{i-1})$ to an estimate $\wh Y_i$ of $Y_i$, such that $\wh Y_i = \psi_i(X^i,Y^{i-1})$.
	\end{definition}
	Any specification of $P_{X_1}$, $(P{\raisebox{-2pt}{$\scriptstyle X_i|X_{i-1}, \wh Y_{i-1}$}})_{i=2}^n$, $(P_{Y_i|X_i})_{i=1}^n$ and $\psi^n$ defines a joint distribution of all the random variables $(X^n,Y^n,\wh Y^n)$ under consideration.
	The Bayesian network of the random variables in dynamic inference with a Markov estimation strategy, meaning that each estimate has the form $\psi_i:\sX\rightarrow\wh\sY$, is illustrated in Fig.~\ref{fig:BN_DI}.
	\begin{figure}[h]
		\centering
		\includegraphics[scale = 0.6]{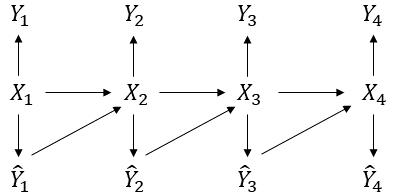}
		\caption{Bayesian network of the random variables under consideration with $n=4$. Here we assume the estimates are made with Markov estimators, such that $\wh Y_i = \psi_i(X_i)$.}
		\label{fig:BN_DI}
	\end{figure}
	
	We use a loss function $\ell:\sX\times\sY\times\wh\sY\rightarrow\R$ to evaluate the estimate made in each round in dynamic inference. This loss function is a generalization of the one used in statistical inference, in the sense that the estimate in each round is evaluated in the context of the observation in that round.
	Given an estimation strategy $\psi^n$, we define its \emph{inference loss} as the expected accumulated loss over the $n$ rounds,
	$\E\big[ \sum_{i=1}^n \ell(X_i, Y_i, \wh Y_i) \big]$.
	The goal of dynamic inference is to find an estimation strategy to minimize the inference loss:
	\begin{align}\label{eq:di_loss}
		\argmin_{\psi^n} \,\, \E\Big[ \sum_{i=1}^n \ell(X_i, Y_i, \wh Y_i) \Big] ,
	\end{align}
	where $\wh Y_i=\psi_i(X^i,Y^{i-1})$.
	Comparing with the statistical inference problem in \eqref{eq:Bayes_inf}, we summarize the two distinctive features of dynamic inference:
	\begin{itemize}[leftmargin=*]
		\item
		The joint distribution of the pair $(X_i,Y_i)$ changes in each round in a controlled manner, as it depends on $(X_{i-1},\wh Y_{i-1})$. 
		\item
		The loss in each round is contextual, as it depends on $X_i$.
	\end{itemize}

	\section{Optimal estimation strategy}\label{sec:MDP_sol}
	In this section we show that the dynamic inference problem can be converted to a Markov decision process, and the optimal estimation strategy can be found via dynamic programming.
	
	\subsection{MDP reformulation}
	\subsubsection{Equivalent expression of inference loss}
	For a given loss function $\ell:\sX\times\sY\times\wh\sY\rightarrow\R$ and a joint distribution of $(X,Y,\wh Y)$, we can define a corresponding \emph{observation-estimate loss function} $\bar \ell:\sX\times\wh\sY\rightarrow\R$ as
	\begin{align}\label{eq:def_bar_ell}
		\bar \ell(x,\hat y) \deq \E[\ell(x,Y,\hat y)|X=x, \wh Y=\hat y] 
	\end{align}
	for $(x,\hat y)\in \sX\times\wh Y $.
	From the specification of the joint distribution of the random variables in the previous section, we know that in dynamic inference $Y_i$ is conditionally independent of $\wh Y_i$ given $X_i$, therefore for any realization $(x_i,\hat y_i)$ of $(X_i,\wh Y_i)$, the value of the $i$th observation-estimate loss $\bar \ell(x_i,\hat y_i)$ can be computed as 
	\begin{align}\label{eq:def_bar_ell_comp}
		\bar \ell(x_i,\hat y_i) = \E[\ell(x_i,Y_i,\hat y_i)|X_i=x_i] .
	\end{align}
	We see that $\bar\ell$ as a function of $(x_i,\hat y_i)$ is determined by $\ell$ and $P_{Y_i|X_i}$, and does not depend on the estimator $\psi_i$. This fact is crucial for the optimality proof later.
	With the above definition, the inference loss can be expressed in terms of the observation-estimate loss: 
	\begin{lemma}\label{lm:acc_loss_di}
		For any estimation strategy, the inference loss in \eqref{eq:di_loss} can be rewritten as
		\begin{align}
			\E\Big[ \sum_{i=1}^n \ell(X_i, Y_i, \wh Y_i) \Big] =  \E\Big[ \sum_{i=1}^n \bar \ell(X_i, \wh Y_i) \Big] .
		\end{align}
	\end{lemma}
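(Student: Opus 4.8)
The plan is to prove the identity round by round: by linearity of expectation it suffices to show $\E[\ell(X_i,Y_i,\wh Y_i)] = \E[\bar\ell(X_i,\wh Y_i)]$ for each $i\in\{1,\dots,n\}$, and then sum over $i$. For a fixed $i$, the natural move is to condition on the pair $(X_i,\wh Y_i)$ and apply the tower property,
\begin{align}
\E[\ell(X_i,Y_i,\wh Y_i)] = \E\big[\,\E[\ell(X_i,Y_i,\wh Y_i)\mid X_i,\wh Y_i]\,\big].
\end{align}

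The key step is to identify the inner conditional expectation. For any realization $(x_i,\hat y_i)$ of $(X_i,\wh Y_i)$, measurability of the first and third arguments of $\ell(X_i,Y_i,\wh Y_i)$ with respect to the conditioning lets us freeze them, so the inner expectation evaluated at $(x_i,\hat y_i)$ equals $\E[\ell(x_i,Y_i,\hat y_i)\mid X_i=x_i,\wh Y_i=\hat y_i]$, which is precisely $\bar\ell(x_i,\hat y_i)$ by the definition in \eqref{eq:def_bar_ell}. Hence $\E[\ell(X_i,Y_i,\wh Y_i)\mid X_i,\wh Y_i]=\bar\ell(X_i,\wh Y_i)$ almost surely; taking expectations of both sides and then summing over $i$ yields the claimed equality.

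I do not expect a genuine obstacle, only a point that deserves care: the definition of $\bar\ell$ in \eqref{eq:def_bar_ell} is stated for a generic joint law of a triple $(X,Y,\wh Y)$, and when instantiated at round $i$ it is understood to use the round-$i$ conditional law of $Y_i$ given $(X_i,\wh Y_i)$ induced by the chosen estimation strategy. One should remark that, by the conditional independence $Y_i\perp\wh Y_i\mid X_i$ encoded in the dynamic-inference Bayesian network (Fig.~\ref{fig:BN_DI}), this round-$i$ observation-estimate loss reduces to $\E[\ell(x_i,Y_i,\hat y_i)\mid X_i=x_i]$ as in \eqref{eq:def_bar_ell_comp}, so it depends only on $\ell$ and $P_{Y_i|X_i}$ and not on $\psi_i$. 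This last fact is not needed for the equality in the lemma itself, which holds for every fixed strategy, but it is the only place where the model structure rather than plain measure theory would enter, and it is what makes the reformulation useful in the subsequent MDP argument.
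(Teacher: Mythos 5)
Your proof is correct and takes essentially the same route as the paper: linearity of expectation reduces the claim to a per-round identity, which follows from the tower property and the definition of $\bar\ell$. The only cosmetic difference is that you condition directly on $(X_i,\wh Y_i)$ whereas the paper first conditions on the full history $(X^i,Y^{i-1})$ and then invokes the conditional independence of $Y_i$ from the past given $X_i$; you correctly observe that this independence is needed only for the strategy-independence of $\bar\ell$ in \eqref{eq:def_bar_ell_comp}, not for the equality itself.
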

	\begin{proof}
For each $i=1,\ldots,n$,
\begin{align}
	\E[ \ell(X_i, Y_i, \wh Y_i) ] &= \E\big[ \E[\ell( X_i, Y_i, \wh Y_i) | X^i, Y^{i-1}] \big] \label{eq:pf_acc_loss_di_1} \\
	&= \E\big[ \E[\ell( X_i, Y_i, \wh Y_i) | X_i, \wh Y_i] \big] \label{eq:pf_acc_loss_di_2} \\
	&=  \E[ \bar \ell(X_i, \wh Y_i) ] \label{eq:pf_acc_loss_di_3}
\end{align}
where \eqref{eq:pf_acc_loss_di_2} is a consequence of the fact that $\wh Y_i$ is determined by $(X^{i},Y^{i-1})$ and the fact that $Y_i$ is conditionally independent of $(X^{i-1}, Y^{i-1})$ given $X_i$; and \eqref{eq:pf_acc_loss_di_3} is due to the definition of $\bar\ell$.
The claim then follows from the fact that
$
\E\big[ \sum_{i=1}^n \ell(X_i, Y_i, \wh Y_i) \big] = \sum_{i=1}^n \E\big[ \ell(X_i, Y_i, \wh Y_i) \big] .
$
	\end{proof}
	
	With Lemma~\ref{lm:acc_loss_di}, the optimization problem in \eqref{eq:di_loss} becomes equivalent to
	\begin{align}\label{eq:di_min_loss}
		\argmin_{\psi^n} \,\, J(\psi_n) ,
	\end{align}
	where $J(\psi_n) \deq \E\big[ \sum_{i=1}^n \bar \ell(X_i, \wh Y_i)\big ]$ equals to the inference loss, and $\wh Y_i=\psi_i(X^i,Y^{i-1})$.

	\subsubsection{Optimality of Markov estimators}
	Next, we show that the search space of the optimization problem in \eqref{eq:di_min_loss} can be restricted to Markov estimators $\bar\psi_i:\sX\rightarrow\wh\sY$, such that $\wh Y_i = \bar\psi_i(X_i)$.
	We start with a lemma known as Blackwell's principle of irrelevant information \cite{MR_SOC_notes}, and provide a proof for completeness.
	\begin{lemma}\label{lm:Blackwell}
		For any fixed function $f:\sU\times\sV\rightarrow \R$ and for any jointly distributed pair $(U,Z)$,
		\begin{align}\label{eq:Blackwell}
			\min_{g:\sU\times\sZ\rightarrow\sV} \E\big[f\big(U,g(U,Z)\big)\big] = \min_{g:\sU\rightarrow\sV} \E\big[f\big(U, g(U)\big)\big] .
		\end{align}
	\end{lemma}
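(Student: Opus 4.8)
The plan is to prove the two inequalities separately. The inequality $\min_{g:\sU\times\sZ\to\sV}\E[f(U,g(U,Z))]\le\min_{g:\sU\to\sV}\E[f(U,g(U))]$ is immediate: every map $g:\sU\to\sV$ can be viewed as a map on $\sU\times\sZ$ that simply ignores its second argument, so the left-hand side minimizes over a strictly larger class of maps and is therefore no larger.

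For the reverse inequality, the key observation is that once $U=u$ is fixed, the value $f(u,g(u,z))$ can never fall below $\inf_{v\in\sV}f(u,v)$, no matter what $z$ is or how $g$ uses it. Accordingly I would introduce the pointwise-optimal map $g^*$ with $g^*(u)\in\argmin_{v\in\sV}f(u,v)$, which depends on $u$ only. Then for every $g:\sU\times\sZ\to\sV$ and every $(u,z)$ we have $f(u,g^*(u))\le f(u,g(u,z))$; conditioning on $U=u$ and taking expectation over $Z$ preserves this, and averaging over $U$ gives $\E[f(U,g^*(U))]\le\E[f(U,g(U,Z))]$. Since $g^*:\sU\to\sV$, the left-hand side is at least $\min_{g:\sU\to\sV}\E[f(U,g(U))]$, and since $g$ was arbitrary we may pass to the infimum over $g$ on the right, obtaining $\min_{g:\sU\to\sV}\E[f(U,g(U))]\le\min_{g:\sU\times\sZ\to\sV}\E[f(U,g(U,Z))]$. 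Combining the two inequalities yields the claimed equality. Conceptually, this says that the extra information carried by $Z$ is ``irrelevant'': conditioned on $U$, the best one can do is already achieved by a decision that depends on $U$ alone.

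The only genuine obstacle is the existence of a measurable selector $g^*$ attaining $\inf_{v\in\sV}f(u,v)$ for (almost) every $u$, together with integrability of $u\mapsto f(u,g^*(u))$. I would dispatch this either by adopting the standing assumption, as is customary in this context, that $\sV$ and $f$ are such that the minimum is attained by a measurable map (e.g.\ $\sV$ finite, or $\sV$ compact with $f(u,\cdot)$ lower semicontinuous), or, in full generality, by replacing $g^*$ with an $\eps$-optimal measurable selector $g_\eps$ satisfying $f(u,g_\eps(u))\le\inf_{v\in\sV}f(u,v)+\eps$, running the same averaging argument, and letting $\eps\downarrow0$. Everything else is a routine use of the tower property of conditional expectation.
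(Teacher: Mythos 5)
Your proof is correct, and it takes a genuinely different route from the paper's. The paper interprets the two sides as Bayes risks $R_f(U\mid U,Z)$ and $R_f(U\mid U)$ and gets the nontrivial inequality by appealing to an external data processing inequality for Bayes risk applied to the Markov chain $U-U-(U,Z)$; your argument is self-contained and more elementary: you exhibit the pointwise minimizer $g^*(u)\in\argmin_{v}f(u,v)$, observe that $f(u,g^*(u))\le f(u,g(u,z))$ for every $g$ and every $(u,z)$, and integrate. In effect you show both sides equal $\E[\inf_{v}f(U,v)]$, which is the content of ``irrelevance of $Z$'' without any machinery. What the paper's route buys is brevity and a conceptual framing (the extra observation $Z$ is conditionally useless given $U$ because of the degenerate Markov chain); what your route buys is independence from the cited lemma and an explicit treatment of the one real technical issue --- existence of a measurable selector attaining (or $\eps$-attaining) the pointwise infimum --- which the paper passes over silently by writing $\min$ throughout. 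Your $\eps$-optimal-selector fallback is the right way to handle that in general, so the argument is complete as proposed.
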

	\begin{proof}
		The left side of \eqref{eq:Blackwell} is the Bayes risk of estimating $U$ based on $(U,Z)$, defined with respect to the loss function $f$, and can be written as $R_f(U|U,Z)$;  while the right side of \eqref{eq:Blackwell} is the Bayes risk of estimating $U$ based on $U$ itself, also defined with respect to the loss function $f$, and can be written as $R_f(U|U)$. 
		It is clear from their definitions that $R_\ell(U|U,Z) \le R_\ell(U|U)$. 
	It also follows from a data processing inequality of Bayes risk \cite[Lemma~1]{MER20} that
	\begin{align}
		R_\ell(U|U,Z) \ge R_\ell(U|U) ,
	\end{align}
	as $U-U-(U,Z)$ form a Markov chain. Hence $R_f(U|U,Z)=R_f(U|U)$, which proves the claim.
	\end{proof}

	The first application of Lemma~\ref{lm:Blackwell} is to prove that the last estimator of an optimal estimation strategy can be replaced by a Markov one, which preserves the optimality.
	\begin{lemma}[Last-round lemma]\label{lm:last_round} 
		Given any estimation strategy $\psi^n$, there exists a Markov estimator $\bar\psi_{n}:\sX\rightarrow\wh\sY$, such that
		$
		J(\psi_{1},\ldots,\psi_{n-1}, \bar\psi_{n}) \le J(\psi^{n}) .
		$
	\end{lemma}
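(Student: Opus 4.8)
The plan is to use the representation $J(\psi^n)=\E\big[\sum_{i=1}^n\bar\ell(X_i,\wh Y_i)\big]$ from Lemma~\ref{lm:acc_loss_di} and observe that changing only the last estimator $\psi_n$ affects only the last summand. First I would note that the joint distribution of $(X^n,Y^{n-1})$ --- and hence that of $(X_1,\wh Y_1,\ldots,X_{n-1},\wh Y_{n-1})$, since $\wh Y_i=\psi_i(X^i,Y^{i-1})$ for $i\le n-1$ --- is determined by $P_{X_1}$, the kernels $(P_{X_i|X_{i-1},\wh Y_{i-1}})_{i=2}^n$ and $(P_{Y_i|X_i})_{i=1}^{n-1}$, and the estimators $\psi_1,\ldots,\psi_{n-1}$, \emph{none of which involves} $\psi_n$. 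In the Bayesian network of Fig.~\ref{fig:BN_DI}, $\wh Y_{n-1}$ precedes $X_n$ in the causal order, so replacing $\psi_n$ by any Markov estimator $\bar\psi_n$ leaves everything up to and including the generation of $X_n$ and $Y^{n-1}$ unchanged. Hence
\begin{align*}
J(\psi_1,\ldots,\psi_{n-1},\bar\psi_n)-J(\psi^n)=\E\big[\bar\ell(X_n,\bar\psi_n(X_n))\big]-\E\big[\bar\ell(X_n,\psi_n(X^n,Y^{n-1}))\big],
\end{align*}
and it suffices to exhibit a Markov estimator $\bar\psi_n:\sX\to\wh\sY$ whose last-round expected observation-estimate loss is at most that of $\psi_n$.

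Next I would invoke Lemma~\ref{lm:Blackwell}. The point that legitimizes this is that the $n$th-round function $\bar\ell$ is fixed: as noted after \eqref{eq:def_bar_ell_comp}, $\bar\ell(x,\hat y)=\E[\ell(x,Y_n,\hat y)\,|\,X_n=x]$ depends only on $\ell$ and $P_{Y_n|X_n}$, not on $\psi_n$, so it can play the role of the fixed $f$ in that lemma. Applying Lemma~\ref{lm:Blackwell} with $\sU=\sX$, $\sV=\wh\sY$, $\sZ=\sX^{n-1}\times\sY^{n-1}$, $f=\bar\ell$, $U=X_n$, $Z=(X^{n-1},Y^{n-1})$ (after the trivial reindexing identifying $\sX^n\times\sY^{n-1}$ with $\sU\times\sZ$) gives
\begin{align*}
\min_{\bar\psi_n:\sX\to\wh\sY}\E\big[\bar\ell(X_n,\bar\psi_n(X_n))\big]
&=\min_{g:\sX^n\times\sY^{n-1}\to\wh\sY}\E\big[\bar\ell(X_n,g(X^n,Y^{n-1}))\big]\\
&\le\E\big[\bar\ell(X_n,\psi_n(X^n,Y^{n-1}))\big],
\end{align*}
the inequality holding because the given $\psi_n$ is one admissible choice in the middle minimization. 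Taking $\bar\psi_n$ to attain the left-hand minimum --- concretely, $\bar\psi_n(x)\in\argmin_{\hat y\in\wh\sY}\bar\ell(x,\hat y)$ chosen pointwise, which is in fact the direct route that would bypass Lemma~\ref{lm:Blackwell} altogether here --- then yields $J(\psi_1,\ldots,\psi_{n-1},\bar\psi_n)\le J(\psi^n)$.

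The argument is short, so I expect the only genuinely delicate point to be the first step: making rigorous that nothing downstream of $\psi_n$ feeds into the first $n-1$ summands. I would pin this down by a short induction on the network factorization showing that $P_{X^n,Y^{n-1}}$ is a functional of $(\psi_1,\ldots,\psi_{n-1})$ only, or simply by the causal ordering in Fig.~\ref{fig:BN_DI}. A secondary technicality is the existence of a measurable pointwise minimizer $\bar\psi_n$; to sidestep measurable-selection hypotheses one could instead take $\bar\psi_n$ to be $\eps$-optimal for arbitrary $\eps>0$, which is enough for the dynamic-programming argument that follows, or (as appears to be implicitly assumed) suppose that a minimizing estimator exists.
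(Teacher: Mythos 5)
Your proposal is correct and follows essentially the same route as the paper's proof: decompose $J$ via Lemma~\ref{lm:acc_loss_di}, observe that the first $n-1$ summands do not depend on $\psi_n$, and invoke Lemma~\ref{lm:Blackwell} with $U=X_n$ and $Z=(X^{n-1},Y^{n-1})$ to replace the last estimator by a Markov one. The extra care you take in justifying that the law of $(X^n,Y^{n-1})$ is unaffected by the choice of $\psi_n$, and the remark on measurable selection, are refinements the paper leaves implicit but do not change the argument.
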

	\begin{proof}
		According to Lemma~\ref{lm:acc_loss_di}, the inference loss of $\psi^n$ can be written as
		\begin{align}
			J(\psi^{n}) 
			&= \E\Big[\sum_{i=1}^{n-1} \bar{\ell}( X_i, \wh Y_i)\Big]
			+ \E\big[ \bar{\ell}(X_n, \psi_{n}(X^{n}, Y^{n-1}) ) \big] . \label{eq:pf_last_round_lm_2}
		\end{align}
		Since the first expectation in \eqref{eq:pf_last_round_lm_2} does not depend on $\psi_{n}$, it suffices to show that there exists a Markov estimator $\bar\psi_{n}:\sX\rightarrow\wh\sY$, such that 
		\begin{align}
			\E\big[ \bar{\ell}\big(X_n, \bar\psi_{n}(X_n) \big)\big]  \le
			\E\big[ \bar{\ell}\big(X_n, \psi_{n}(X^{n}, Y^{n-1}) \big)\big] .
		\end{align}
		The existence of such an estimator is guaranteed by Lemma~\ref{lm:Blackwell}.
	\end{proof}

	Lemma~\ref{lm:Blackwell} can be further used to prove that whenever the last estimator is Markov, its preceding estimator can also be replaced by a Markov one which preserves the optimality.
	\begin{lemma}[$(i-1)$th-round lemma]\label{lm:i-1th_round} 
		For any $i\ge 2$, given any estimation strategy $(\psi_{1},\ldots,\psi_{i-1}, \bar\psi_{i})$ for an $i$-round dynamic inference, if the last estimator is a Markov one $\bar \psi_{i}:\sX\rightarrow\wh\sY$, then there exists a Markov estimator $\bar\psi_{i-1}:\sX\rightarrow\wh\sY$ for the $(i-1)$th round, such that
		$
		J(\psi_{1},\ldots,\psi_{i-2}, \bar\psi_{i-1}, \bar\psi_{i}) \le J(\psi_{1},\ldots,\psi_{i-1}, \bar\psi_{i}) .
		$
	\end{lemma}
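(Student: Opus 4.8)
The plan is to isolate the dependence of the inference loss on $\psi_{i-1}$, absorb the ``downstream'' effect that $\psi_{i-1}$ has on round $i$ into a modified per-round loss, and then invoke Blackwell's principle (Lemma~\ref{lm:Blackwell}) exactly as in the last-round lemma. First I would split, using Lemma~\ref{lm:acc_loss_di},
\begin{align}
J(\psi_{1},\ldots,\psi_{i-1},\bar\psi_{i}) = \E\Big[\sum_{j=1}^{i-2}\bar\ell(X_j,\wh Y_j)\Big] + \E\big[\bar\ell(X_{i-1},\wh Y_{i-1})\big] + \E\big[\bar\ell(X_{i},\bar\psi_{i}(X_{i}))\big] ,
\end{align}
where $\wh Y_{i-1} = \psi_{i-1}(X^{i-1},Y^{i-2})$ and the last term already uses that $\bar\psi_i$ is Markov. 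The first expectation is determined by $\psi_1,\ldots,\psi_{i-2}$ and the fixed kernels, hence does not depend on $\psi_{i-1}$; only the last two terms do.

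Next I would handle the round-$i$ term. Since $X_i$ is generated from $P_{X_i|X_{i-1},\wh Y_{i-1}}$, it is conditionally independent of $(X^{i-2},Y^{i-2})$ given $(X_{i-1},\wh Y_{i-1})$, and $\bar\psi_i(X_i)$ is a function of $X_i$ alone; therefore
\begin{align}
h(x,\hat y) \deq \E\big[\bar\ell(X_i,\bar\psi_i(X_i))\,\big|\,X_{i-1}=x,\wh Y_{i-1}=\hat y\big]
\end{align}
is a well-defined function on $\sX\times\wh\sY$ that is determined by $\bar\ell$, $\bar\psi_i$ and $P_{X_i|X_{i-1},\wh Y_{i-1}}$ only --- in particular it does not depend on $\psi_{i-1}$ (nor on $\psi_1,\ldots,\psi_{i-2}$). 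By the tower property $\E[\bar\ell(X_i,\bar\psi_i(X_i))] = \E[h(X_{i-1},\wh Y_{i-1})]$, so setting $\tilde\ell(x,\hat y)\deq \bar\ell(x,\hat y)+h(x,\hat y)$ the two $\psi_{i-1}$-dependent terms combine into $\E[\tilde\ell(X_{i-1},\psi_{i-1}(X^{i-1},Y^{i-2}))]$.

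Finally I would apply Lemma~\ref{lm:Blackwell} with $U=X_{i-1}$, $Z=(X^{i-2},Y^{i-2})$, $\sV=\wh\sY$ and $f=\tilde\ell$, viewing $\psi_{i-1}$ as a map of $(U,Z)$: there exists a Markov estimator $\bar\psi_{i-1}:\sX\to\wh\sY$ with $\E[\tilde\ell(X_{i-1},\bar\psi_{i-1}(X_{i-1}))]\le \E[\tilde\ell(X_{i-1},\psi_{i-1}(X^{i-1},Y^{i-2}))]$. Replacing $\psi_{i-1}$ by $\bar\psi_{i-1}$ leaves the joint law of $(X^{i-1},Y^{i-2})$ unchanged --- it is governed by $\psi_1,\ldots,\psi_{i-2}$ and the fixed kernels --- and hence leaves both the first sum and the functions $h,\tilde\ell$ unchanged; so $J(\psi_1,\ldots,\psi_{i-2},\bar\psi_{i-1},\bar\psi_i) = \E[\sum_{j=1}^{i-2}\bar\ell(X_j,\wh Y_j)] + \E[\tilde\ell(X_{i-1},\bar\psi_{i-1}(X_{i-1}))] \le J(\psi_1,\ldots,\psi_{i-1},\bar\psi_i)$.

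The main obstacle I anticipate is the bookkeeping around conditional independence: carefully justifying that the downstream function $h$ is genuinely independent of $\psi_{i-1}$ and of the earlier estimators, so that after the swap the modified loss $\tilde\ell$ is literally the same function and only the distribution of $X_{i-1}$ one averages over may change --- which is exactly the situation Lemma~\ref{lm:Blackwell} is designed to handle. Everything else is a mechanical repetition of the last-round argument.
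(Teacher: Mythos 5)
Your proposal is correct and follows essentially the same route as the paper: decompose $J$ via Lemma~\ref{lm:acc_loss_di}, absorb the round-$i$ term into a modified loss on $(X_{i-1},\wh Y_{i-1})$ using the fact that the conditional expectation is determined by the fixed kernel $P{\raisebox{-2pt}{$\scriptstyle X_i|X_{i-1},\wh Y_{i-1}$}}$ and $\bar\psi_i$ rather than by $\psi_{i-1}$, and then apply Lemma~\ref{lm:Blackwell}. Your $\tilde\ell=\bar\ell+h$ is exactly the paper's function $f$, and your extra care about why $h$ is independent of the earlier estimators is a welcome (if not strictly necessary) elaboration.
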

	\begin{proof}
	According to Lemma~\ref{lm:acc_loss_di}, the inference loss of the given $(\psi_{1},\ldots,\psi_{i-1}, \bar\psi_{i})$ is
	\begin{align}
		J(\psi_{1},\ldots,\psi_{i-1}, \bar\psi_{i})
		= \E\Big[\sum_{j=1}^{i-2} \bar{\ell}( X_j, \wh Y_j)\Big] + 
		\E\big[ \bar{\ell}(X_{i-1}, \wh Y_{i-1} )\big] + 
		\E\big[ \bar{\ell}(X_i, \bar\psi_{i}(X_i) \big)\big] . \label{eq:pf_i-1_round_lm_2}
	\end{align}
	Since the first expectation in \eqref{eq:pf_i-1_round_lm_2} does not depend on $\psi_{i-1}$, it suffices to show that there exists a Markov estimator $\bar\psi_{i-1}:\sX\rightarrow\wh\sY$, such that 
	\begin{align}
		\E\big[ \bar{\ell}(X_{i-1}, \bar\psi_{i-1}(X_{i-1}) \big)\big] + 
		\E\big[ \bar{\ell}(\bar X_i, \bar\psi_{i}(\bar X_i) \big)\big]
		\le 
		\E\big[ \bar{\ell}(X_{i-1}, \wh Y_{i-1} \big)\big] + 
		\E\big[ \bar{\ell}(X_i, \bar\psi_{i}(X_i) \big)\big] , \label{eq:pf_i-1_round_lm_3}
	\end{align}
	where $\bar X_i$ on the left side is the observation in the $i$th round when the Markov estimator $\bar\psi_{i-1}$ is used in the $(i-1)$th round.
	To get around with the dependence of $X_i$ on $\psi_{i-1}$, we write the second expectation on the right side of \eqref{eq:pf_i-1_round_lm_3} as 
	\begin{align}
		\E\big[ \E\big[ \bar{\ell}( X_i, \bar\psi_{i}(X_i) \big) \big| X_{i-1}, \wh Y_{i-1}\big] \big]
	\end{align}
	and notice that the inner conditional expectation as a function of $( X_{i-1}, \wh Y_{i-1})$ does not depend on $\psi_{i-1}$. This is because the conditional distribution of $X_i$ given $(X_{i-1}, \wh Y_{i-1})$ is specified by the probability transition kernel $P{\raisebox{-2pt}{$\scriptstyle X_i|X_{i-1}, \wh Y_{i-1}$}}$.
	It follows that the right side of \eqref{eq:pf_i-1_round_lm_3} can be written as
	\begin{align}
		& \E\Big[ \bar{\ell}\big(X_{i-1}, \wh Y_{i-1} \big) + 
		\E\big[ \bar{\ell}\big(X_i, \bar\psi_{i}(X_i) \big) \big|  X_{i-1}, \wh Y_{i-1}\big] \Big] & \nonumber \\
		=& \E\big[f\big(	X_{i-1}, \wh Y_{i-1} \big)\big] \\
		=& \E\big[f\big(X_{i-1}, \psi_{i-1}(X^{i-1}, Y^{i-2})\big) \big] ,
	\end{align}
	where the function $f$ does not depend on $\psi_{i-1}$.
	It follows from Lemma~\ref{lm:Blackwell} that there exists an estimator $\bar\psi_{i-1}:\sX\rightarrow\wh\sY$, such that 
	\begin{align}
		& \E\big[f\big(X_{i-1}, \psi_{i-1}(X^{i-1}, Y^{i-2})\big) \big] \nonumber \\
		\ge & \E\big[f\big(	X_{i-1}, \bar\psi_{i-1}( X_{i-1}) \big)\big] \\
		= & \E\Big[ \bar{\ell}( X_{i-1}, \bar\psi_{i-1}(X_{i-1}) \big) + 
		\E\big[ \bar{\ell}(\bar X_i, \bar\psi_{i}( \bar X_i) \big) \big| X_{i-1}, \bar\psi_{i-1}( X_{i-1})\big] \Big] \\
		= & \E\big[ \bar{\ell}( X_{i-1}, \bar\psi_{i-1}(X_{i-1}) \big)\big] + 
		\E\big[ \bar{\ell}( \bar X_i, \bar\psi_{i}( \bar X_i) \big)\big] ,
	\end{align}
	which proves \eqref{eq:pf_i-1_round_lm_3} and the claim.
	\end{proof}
	
	With Lemma~\ref{lm:last_round} and Lemma~\ref{lm:i-1th_round}, we can finally prove the optimality of Markov estimators.
	\begin{theorem}\label{th:Markov_DI}
		The minimum of $J(\psi^n)$ in \eqref{eq:di_min_loss}
		can be achieved by an estimation strategy $\bar\psi^n$ with Markov estimators $\bar\psi_{i}:\sX\rightarrow\wh\sY$, $i=1,\ldots,n$, such that $\wh Y_i = \bar\psi_{i}(X_i)$.
	\end{theorem}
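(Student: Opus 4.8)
The plan is to establish the theorem by backward induction on the round index, converting the estimators of an arbitrary strategy into Markov ones one at a time, starting from the last round. Fix any estimation strategy $\psi^n$. First I would apply Lemma~\ref{lm:last_round} to replace $\psi_n$ by a Markov estimator $\bar\psi_n:\sX\rightarrow\wh\sY$, obtaining $J(\psi_1,\ldots,\psi_{n-1},\bar\psi_n)\le J(\psi^n)$. Then, for $i=n,n-1,\ldots,2$ in decreasing order, and assuming inductively that the strategy currently in hand is $(\psi_1,\ldots,\psi_{i-1},\bar\psi_i,\ldots,\bar\psi_n)$ with $\bar\psi_i,\ldots,\bar\psi_n$ all Markov, I would apply Lemma~\ref{lm:i-1th_round} to replace $\psi_{i-1}$ by a Markov estimator $\bar\psi_{i-1}:\sX\rightarrow\wh\sY$ with the inference loss not increased. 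After the $i=2$ step, the resulting strategy $\bar\psi^n=(\bar\psi_1,\ldots,\bar\psi_n)$ has Markov estimators throughout and satisfies $J(\bar\psi^n)\le J(\psi^n)$.

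The step that needs attention is that Lemma~\ref{lm:i-1th_round}, as stated, concerns an $i$-round problem in which the last estimator is the only Markov one, whereas in the induction the estimator to be modified, $\psi_{i-1}$, is followed by a whole block of already-Markov estimators $\bar\psi_i,\ldots,\bar\psi_n$. I would handle this by checking that the proof of Lemma~\ref{lm:i-1th_round} carries over unchanged in this situation: by Lemma~\ref{lm:acc_loss_di} the inference loss of $(\psi_1,\ldots,\psi_{i-1},\bar\psi_i,\ldots,\bar\psi_n)$ decomposes as $\E[\sum_{j=1}^{i-2}\bar\ell(X_j,\wh Y_j)]+\E[\bar\ell(X_{i-1},\wh Y_{i-1})]+\E[\sum_{j=i}^{n}\bar\ell(X_j,\bar\psi_j(X_j))]$; the first term is independent of $\psi_{i-1}$, and conditioning the third term on $(X_{i-1},\wh Y_{i-1})$ produces a fixed function of $(X_{i-1},\wh Y_{i-1})$, since it is determined only by the transition kernels $P{\raisebox{-2pt}{$\scriptstyle X_j|X_{j-1},\wh Y_{j-1}$}}$, the quantity-generation models $P_{Y_j|X_j}$, and the fixed Markov maps $\bar\psi_i,\ldots,\bar\psi_n$, none of which involve $\psi_{i-1}$. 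Folding this conditional expectation together with $\bar\ell(X_{i-1},\cdot)$ into a single function $f(X_{i-1},\wh Y_{i-1})$ that does not depend on $\psi_{i-1}$, and recalling $\wh Y_{i-1}=\psi_{i-1}(X^{i-1},Y^{i-2})$, Lemma~\ref{lm:Blackwell} supplies a Markov $\bar\psi_{i-1}:\sX\rightarrow\wh\sY$ with $\E[f(X_{i-1},\bar\psi_{i-1}(X_{i-1}))]\le\E[f(X_{i-1},\psi_{i-1}(X^{i-1},Y^{i-2}))]$, which is exactly the inequality the induction needs. I expect this verification — that the tail loss from rounds $i$ through $n$ is a $\psi_{i-1}$-independent function of $(X_{i-1},\wh Y_{i-1})$ — to be the main (and essentially the only) obstacle; the rest is a mechanical chaining of Lemma~\ref{lm:last_round} and Lemma~\ref{lm:i-1th_round}.

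Finally, since the inequality $J(\bar\psi^n)\le J(\psi^n)$ holds for every strategy $\psi^n$, the infimum of $J$ over all estimation strategies equals its infimum over Markov strategies; assuming the minimum over all strategies is attained (as the theorem's phrasing presumes) by some $\psi^n$, the associated $\bar\psi^n$ is a Markov strategy attaining that same minimum, which is the assertion of the theorem.
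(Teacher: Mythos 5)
Your proposal is correct and follows essentially the same route as the paper: replace the last estimator via Lemma~\ref{lm:last_round}, then sweep backward replacing each $\psi_{i-1}$ via Lemma~\ref{lm:i-1th_round}. In fact you are more explicit than the paper about the one genuine subtlety --- that Lemma~\ref{lm:i-1th_round} as stated covers only an $i$-round problem, so the tail losses from rounds $i$ through $n$ must be absorbed into a $\psi_{i-1}$-independent function of $(X_{i-1},\wh Y_{i-1})$ --- which the paper dispatches with a brief appeal to ``the additive structure of the inference loss.''
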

	\begin{proof}
		Picking an optimal estimation strategy $\psi^n$, we first replace its last estimator by a Markov one that preserves the optimality of the strategy, as guaranteed by Lemma~\ref{lm:last_round}.
		Then, for $i=n,\ldots,2$, we repeatedly replace the $(i-1)$th estimator by a Markov one that preserves the optimality of the previous strategy, as guaranteed by Lemma~\ref{lm:i-1th_round} and the additive structure of the inference loss as in $J(\psi^n)$.
		Finally we obtain an estimation strategy consisting of Markov estimators achieving the same inference loss as the originally picked strategy.
	\end{proof}

	\subsubsection{Conversion to MDP}\label{sec:convert_mdp}
	Theorem~\ref{th:Markov_DI} with Lemma~\ref{lm:acc_loss_di} imply that the original dynamic inference problem in \eqref{eq:di_loss} is equivalent to
	\begin{align}\label{eq:di_MDP}
		\argmin_{ \psi^n} \E\Big[ \sum_{i=1}^n \bar{\ell}(X_i, \wh Y_i) \Big] , \quad \wh Y_i = \psi_{i}( X_i) .
	\end{align}
	With this reformulation, we see that the unknown quantities $Y_i$ do not appear in the loss function in \eqref{eq:di_MDP} any more, and the optimization problem becomes a standard MDP:
	the observations $X^n$ become the states in this MDP, the estimates $\wh Y^n$ become the actions, the probability transition kernel $P{\raisebox{-2pt}{$\scriptstyle X_i|X_{i-1}, \wh Y_{i-1}$}}$ now defines the controlled state transition, and any Markov estimation strategy $\psi^n$ becomes a policy of this MDP. 
	The goal of dynamic inference then becomes finding the optimal policy of this MDP to minimize the expected accumulated loss with respect to $\bar\ell$. 
	Conversely, the solution to the MDP will be an optimal estimation strategy for dynamic inference.

	\subsection{Solution via dynamic programming}\label{sec:sol_dp}
	\subsubsection{Optimal estimation strategy}
	From the theory of MDP \cite{dp_oc_vol1} it is known that the optimal policy for the MDP in \eqref{eq:di_MDP}, or the optimal estimation strategy for dynamic inference, can be found via dynamic programming.
	To derive the optimal estimators, define the functions $Q^*_i : \sX\times\wh Y\rightarrow\R$ and $V^*_i:\sX\rightarrow\R$ recursively in backward as 
	$
	Q^*_n(x,\hat y) \deq \bar\ell (x, \hat y) 
	$
	,
	\begin{align}
	V^*_i(x) \deq \min_{\hat y \in\wh \sY} Q^*_i (x, \hat y)   , \quad  i=n,\ldots,1, \label{eq:dp_V} 
	\end{align}
	and
	\begin{align}
		Q^*_i(x,\hat y) &\deq \bar\ell (x, \hat y) + \E[V^*_{i+1}(X_{i+1})|X_i=x, \wh Y_i=\hat y], \quad i=n-1,\ldots,1 \label{eq:dp_Q} .
	\end{align}
	The optimal estimate to make in the $i$th round when $X_i=x$ is then
	\begin{align}\label{eq:dp_policy}
		\psi_i^*(x) \deq \argmin_{\hat y \in\wh \sY} Q^*_i (x, \hat y) , \quad i=1,\ldots,n .
	\end{align}

	\subsubsection{Minimum inference loss and loss-to-go}
	For any estimation strategy $\psi^n$, we can define its loss-to-go at the $i$th round of estimation when $X_i=x$ as
	the conditional expected loss accumulated from the $i$th round to the final round given that the observation in the $i$th round is $x$:
	\begin{align}\label{eq:def_ctg}
		V_i(x; \psi^n) \deq \E\Big[ \sum_{j=i}^n \ell(X_j, Y_j, \wh Y_j) \Big| X_i=x\Big] .
	\end{align}
	The following theorem
	states that the estimation strategy $(\psi^*_1,\ldots,\psi^*_n)$ derived from dynamic programming not only achieves the minimum inference loss, but also achieves the minimum loss-to-go in each round with any observation in that round.
	\begin{theorem}\label{th:opt_dp}
		The estimators $(\psi^*_1,\ldots,\psi^*_n)$ defined in \eqref{eq:dp_policy} 
		constitute an optimal estimation strategy for dynamic inference, which achieves the minimum in \eqref{eq:di_loss}.
		Moreover, for any Markov estimation strategy $\psi^n$ with $\psi_i:\sX\rightarrow\wh\sY$, its loss-to-go satisfies
		\begin{align}
			V_i(x; \psi^n) \ge V^*_i(x) \label{eq:opt_V_dp}
		\end{align}
		for all $x\in\sX$ and $i=1,\ldots,n$,
		with equality if $\psi_j(x)=\psi_j^*(x)$ for all $x\in\sX$ and $j\ge i$.
	\end{theorem}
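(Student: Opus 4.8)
The plan is to reduce both claims to a single Bellman-type recursion for the loss-to-go $V_i(\cdot\,;\psi^n)$ and then run one backward induction on $i$, after which the optimality statement follows by combining with Theorem~\ref{th:Markov_DI}.

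First I would rewrite the loss-to-go in terms of the observation-estimate loss. Running the term-by-term argument of Lemma~\ref{lm:acc_loss_di} with an additional conditioning on $X_i=x$ throughout — using that $Y_j$ is conditionally independent of $(X^{j-1},Y^{j-1})$ given $X_j$ — gives $\E[\ell(X_j,Y_j,\wh Y_j)\mid X_i=x]=\E[\bar\ell(X_j,\wh Y_j)\mid X_i=x]$ for every $j\ge i$, so that for a Markov strategy (where $\wh Y_j=\psi_j(X_j)$)
\begin{align}
V_i(x;\psi^n)=\E\Big[\sum_{j=i}^n\bar\ell\big(X_j,\psi_j(X_j)\big)\,\Big|\,X_i=x\Big].
\end{align}
Next I would note that under a Markov strategy the controlled state process $X^n$ is a Markov chain: in the Bayesian network of Fig.~\ref{fig:BN_DI} the only parent of $X_{j+1}$ is $(X_j,\wh Y_j)=(X_j,\psi_j(X_j))$, hence effectively $X_j$ alone. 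Conditioning the tail on $X_{i+1}$ and invoking this Markovianity to discard the conditioning on $X_i=x$ in the tail then yields, for $i\le n-1$,
\begin{align}
V_i(x;\psi^n)=\bar\ell\big(x,\psi_i(x)\big)+\E\big[V_{i+1}(X_{i+1};\psi^n)\,\big|\,X_i=x,\wh Y_i=\psi_i(x)\big],
\end{align}
while $V_n(x;\psi^n)=\bar\ell(x,\psi_n(x))$; the conditional law of $X_{i+1}$ appearing here is exactly the observation-transition kernel evaluated at $(x,\psi_i(x))$, matching the definition \eqref{eq:dp_Q} of $Q^*_i$.

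With the recursion in hand the induction is routine. For $i=n$, $V_n(x;\psi^n)=\bar\ell(x,\psi_n(x))=Q^*_n(x,\psi_n(x))\ge V^*_n(x)$ by \eqref{eq:dp_V}, with equality when $\psi_n(x)=\psi^*_n(x)$. For the step, assuming $V_{i+1}(\cdot;\psi^n)\ge V^*_{i+1}(\cdot)$ pointwise, the recursion and \eqref{eq:dp_Q} give
\begin{align}
V_i(x;\psi^n)\ge\bar\ell\big(x,\psi_i(x)\big)+\E\big[V^*_{i+1}(X_{i+1})\,\big|\,X_i=x,\wh Y_i=\psi_i(x)\big]=Q^*_i\big(x,\psi_i(x)\big)\ge V^*_i(x),
\end{align}
which is \eqref{eq:opt_V_dp}; and if in addition $\psi_j\equiv\psi^*_j$ for all $j\ge i$, the inductive hypothesis holds with equality and $Q^*_i(x,\psi^*_i(x))=\min_{\hat y}Q^*_i(x,\hat y)=V^*_i(x)$ by \eqref{eq:dp_policy}--\eqref{eq:dp_V}, so both inequalities collapse and $V_i(x;\psi^n)=V^*_i(x)$. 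Finally, for the optimality claim, Lemma~\ref{lm:acc_loss_di} together with the first display shows that the inference loss of any Markov $\psi^n$ equals $\E[V_1(X_1;\psi^n)]$ with $X_1\sim P_{X_1}$, which is at least $\E[V^*_1(X_1)]$ by \eqref{eq:opt_V_dp} and is attained by $(\psi^*_1,\ldots,\psi^*_n)$ through the equality case with $i=1$; Theorem~\ref{th:Markov_DI} then upgrades this to optimality among all estimation strategies, i.e.\ $(\psi^*_1,\ldots,\psi^*_n)$ achieves the minimum in \eqref{eq:di_loss}.

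I expect the only genuinely delicate point to be the Markov reduction producing the Bellman recursion: it rests on two separate conditional-independence facts read off the model — that $Y_j$ is conditionally independent of $(X^{j-1},Y^{j-1})$ given $X_j$ (to replace $\ell$ by $\bar\ell$, exactly as in Lemma~\ref{lm:acc_loss_di}) and that the controlled state sequence is Markov under a Markov policy (to localize the tail of the loss-to-go at $X_{i+1}$). Once the recursion is established, the remainder is standard finite-horizon dynamic-programming bookkeeping.
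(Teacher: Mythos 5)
Your proof is correct and the substantive part---the Bellman recursion for the loss-to-go and the backward induction establishing \eqref{eq:opt_V_dp} with its equality case---is essentially the paper's own argument, resting on the same two conditional-independence facts (that $Y_j$ is conditionally independent of the past given $X_j$, and that the state sequence is Markov under a Markov policy). The one place you diverge is the first claim: the paper disposes of it by citing the standard optimality of dynamic programming for finite-horizon MDPs after the reformulation of Section~\ref{sec:convert_mdp}, whereas you derive it internally by writing the inference loss of any Markov strategy as $\E[V_1(X_1;\psi^n)]$, applying \eqref{eq:opt_V_dp} at $i=1$, and then invoking Theorem~\ref{th:Markov_DI} to pass to arbitrary strategies---a slightly longer but fully self-contained route that avoids the external appeal.
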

	\begin{proof}
	The first claim stating that the estimation strategy $(\psi^*_1,\ldots,\psi^*_n)$ achieves the minimum in \eqref{eq:di_loss} follows from the equivalence between the original problem in \eqref{eq:di_loss} and the MDP reformulation in \eqref{eq:di_MDP} as discussed in Section~\ref{sec:convert_mdp}, and from the well-known optimality of the solution via dynamic programming to MDP \cite{dp_oc_vol1}.
	
	The second claim can be proved via backward induction.
	Consider an arbitrary Markov estimation strategy $\psi^n$.
	\begin{itemize}[leftmargin=*]
		\item 
		In the final round, for all $x\in\sX$,
		\begin{align}
			V_n(x; \psi^n) 
			&= \E[ \ell(x,Y_n,\psi_n(x))|X_n = x ] \label{eq:pf_opt_dp1} \\
			&= \bar \ell(x,\psi_n(x)) \label{eq:pf_opt_dp2} \\
			&\ge V^*_n(x) , \label{eq:pf_opt_dp3} 
		\end{align}
		where \eqref{eq:pf_opt_dp1} follows from the definition of $V_n$ in \eqref{eq:def_ctg}; \eqref{eq:pf_opt_dp2} follows from the way how $\bar\ell$ can be computed as in \eqref{eq:def_bar_ell_comp}; and \eqref{eq:pf_opt_dp3} follows from the definition of $V^*_n$ above \eqref{eq:dp_Q}, while the equality holds if $\psi_n(x)=\psi^*_n(x)$.
		\item
		For $i=n-1,\ldots,1$, as the inductive assumption, suppose \eqref{eq:opt_V_dp} holds in the $(i+1)$th round. 
		We first show a self-recursive expression of $V_i(x; \psi^n)$:
		\begin{align}
			V_i(x; \psi^n) 
			&= \E\Big[ \sum_{j=i}^n \ell(X_j, Y_j, \wh Y_j) \Big| X_i=x\Big] \\
			&= \E[\ell(X_i,Y_i,\wh Y_i)|X_i=x] + 
			\E\Big[ \sum_{j=i+1}^n \ell(X_j, Y_j, \wh Y_j) \Big| X_i=x\Big] \\
			&= \E\big[\E[\ell(X_i,Y_i,\wh Y_i)|X_i=x, \wh Y_i] \big|X_i=x\big] +  \nonumber \\
			& \quad \,\,
			\E\Big[\E\Big[ \sum_{j=i+1}^n \ell(X_j, Y_j, \wh Y_j) \Big| X_i=x, X_{i+1} \Big] \Big |  X_i=x \Big] \\
			&= \E\big[\bar\ell(x,\wh Y_i) \big|X_i=x\big] + 
			\E\Big[\E\Big[ \sum_{j=i+1}^n \ell(X_j, Y_j, \wh Y_j) \Big|X_{i+1}\Big] \Big |  X_i=x \Big] \label{eq:pf_V_recur_1} \\
			&= \bar\ell(x,\psi_i(x)) + \E\big[V_i(X_{i+1};\psi^n) |  X_i=x \big] 
		\end{align}
		where the first term of \eqref{eq:pf_V_recur_1} follows from the definition of $\bar\ell$ in \eqref{eq:def_bar_ell}, while the second term of \eqref{eq:pf_V_recur_1} follows from the fact that $X_i$ is conditionally independent of $(X_{i+1}^n, Y_{i+1}^n, \wh Y_{i+1}^n)$ given $X_{i+1}$, which is a consequence of the assumption that the estimators under consideration are Markov and the specification of the joint distribution of $(X^n, Y^n, \wh Y^n)$ in Section~\ref{sec:formulation}.
		Then,
		\begin{align}
			V_i(x; \psi^n) 
			&\ge \bar\ell(x,\psi_i(x)) + 
			\E\big[V^*_i(X_{i+1}) |  X_i=x \big] \label{eq:pf_opt_V_dp_1} \\
			&= \bar\ell(x,\psi_i(x)) + 
			\E\big[V^*_i(X_{i+1}) |  X_i=x, \wh Y_i = \psi_i(x) \big] \label{eq:pf_opt_V_dp_2} \\
			&= Q^*_i(x,\psi_i(x)) \label{eq:pf_opt_V_dp_3} \\
			&\ge V^*_i(x)
		\end{align}
		where \eqref{eq:pf_opt_V_dp_1} follows from the inductive assumption; \eqref{eq:pf_opt_V_dp_2} follows from the fact that $\wh Y_i$ is determined given $X_i=x$ through the Markov estimator $\psi_i$; \eqref{eq:pf_opt_V_dp_3} follows from the definition of $Q^*_i$ in \eqref{eq:dp_Q}; and the final equality condition follows from the definitions of $V^*_i$ above \eqref{eq:dp_Q} and $\psi^*_i$ in \eqref{eq:dp_policy}.
	\end{itemize}
	This proves the second claim.
	\end{proof}
	
	A consequence of Theorem~\ref{th:opt_dp} is that the minimum loss-to-go at the $i$th round can be expressed in terms of $V^*_i$.
	Moreover, once the values of $V^*_i(x)$ for all $x\in\sX$ and $i=1,\ldots,n$ are computed, the optimal estimation strategy for any $(n-i+1)$-round dynamic inference with the same model for $j=i,\ldots,n$ can be determined by these values and the observation-estimate loss function $\bar\ell$.
	These results are stated in the following corollary.
	\begin{corollary}
		For any $i=1,\ldots,n$ and any initial distribution $P_{X_i}$,
		\begin{align}
			\min_{\psi_i,\ldots,\psi_n} \,\, \E\Big[ \sum_{j=i}^n \ell(X_j, Y_j, \wh Y_j) \Big] = \E[V^*_i(X_i)] ,
		\end{align}
		and the minimum is achieved by the estimators $(\psi^*_i,\ldots,\psi^*_n)$ defined in \eqref{eq:dp_policy}.
	\end{corollary}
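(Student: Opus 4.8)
The plan is to regard the minimization in the corollary as a bona fide dynamic inference problem in its own right: the $(n-i+1)$-round problem whose rounds are $i,\ldots,n$, whose initial observation has the prescribed law $P_{X_i}$, whose observation-transition kernels are $P_{X_j \mid X_{j-1}, \wh Y_{j-1}}$ for $j = i+1,\ldots,n$, and whose quantity-generation kernels are $P_{Y_j \mid X_j}$ for $j=i,\ldots,n$. Everything proved in Section~\ref{sec:MDP_sol} refers only to the kernels of the rounds it concerns and never to $P_{X_1}$ or to earlier kernels, so each result transfers to this sub-problem after relabeling. First I would apply Theorem~\ref{th:Markov_DI} to the sub-problem to reduce the infimum over all strategies $(\psi_i,\ldots,\psi_n)$ to the infimum over Markov strategies $\psi_j:\sX\rightarrow\wh\sY$, so that it suffices to bound the objective from below over Markov strategies and exhibit a matching one.

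Next, for a Markov strategy I would note that the loss-to-go $V_i(\cdot\,;\psi^n)$ of \eqref{eq:def_ctg}, being a conditional expectation given $X_i$, as well as the functions $V^*_i,Q^*_i$ of the backward recursion \eqref{eq:dp_V}--\eqref{eq:dp_Q}, depend only on the model of rounds $\ge i$; hence the inductive argument of Theorem~\ref{th:opt_dp} applies to the sub-problem and yields the pointwise bound $V_i(x;\psi^n)\ge V^*_i(x)$ for every $x\in\sX$, with equality when $\psi_j(x)=\psi^*_j(x)$ for all $x$ and all $j\ge i$. Averaging over $X_i\sim P_{X_i}$ and using the tower property then gives
\begin{align}
\E\Big[\sum_{j=i}^n \ell(X_j,Y_j,\wh Y_j)\Big] = \E\big[V_i(X_i;\psi^n)\big] \ge \E\big[V^*_i(X_i)\big]
\end{align}
for every Markov strategy, and combined with the reduction above this shows the left-hand side of the corollary is at least $\E[V^*_i(X_i)]$.

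To finish, I would take the Markov strategy $(\psi^*_i,\ldots,\psi^*_n)$ of \eqref{eq:dp_policy}, for which the pointwise inequality is an equality, so the displayed chain holds with equality and the bound is attained; this simultaneously proves the identity and identifies $(\psi^*_i,\ldots,\psi^*_n)$ as a minimizer. The only delicate point is the claim that the statements and proofs of Section~\ref{sec:MDP_sol} are insensitive to the law of the initial observation and to the kernels of rounds preceding the one under consideration; I expect this to be the main thing to spell out carefully, and I would do so by checking that the backward recursion and the induction in the proof of Theorem~\ref{th:opt_dp} never invoke $P_{X_1}$ or any kernel indexed below $i$.
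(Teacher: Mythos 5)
Your argument is correct and is exactly the route the paper intends: the corollary is stated as a direct consequence of Theorem~\ref{th:opt_dp} (combined with Theorem~\ref{th:Markov_DI} to justify restricting to Markov strategies), obtained by viewing rounds $i,\ldots,n$ as a standalone dynamic inference problem, invoking the pointwise bound $V_i(x;\psi^n)\ge V^*_i(x)$ with its equality condition, and averaging over $X_i\sim P_{X_i}$. Your explicit check that the backward recursion and the induction never invoke $P_{X_1}$ or kernels indexed below $i$ is the one detail the paper leaves implicit, and it is worth spelling out as you propose.
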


	\subsection{An illustrative example}\label{sec:example}
	
	We now work out an example to illustrate how an optimal estimation strategy for dynamic inference works.
	Consider a situation where the observations, the quantities of interest, and the estimates all take binary values, i.e.\ $\sX=\sY=\wh\sY=\{0,1\}$.
	The observation-transition model is assumed to be stationary and deterministic, such that $X_i = 1-X_{i-1}$ if $\wh Y_{i-1}=0$, while $X_i=X_{i-1}$ if $\wh Y_{i-1}=1$,
	as depicted
	in Fig.~\ref{fig:fsm1}.
				The quantity-generation model is also stationary and is described by $P(Y=1|X=0)=0.1$ and $P(Y=1|X=1)=0.6$.
	The loss function neglects the observation and takes the form
	$
	\ell(x,y,\hat y) = \I\{y\neq \hat y\} .
	$

	\begin{figure}[h]
		\centering
		\includegraphics[scale = 0.46]{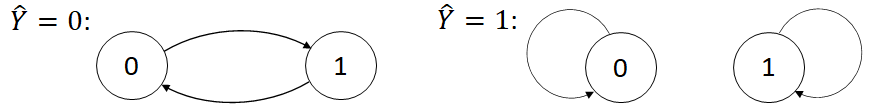}
		\caption{An example of stationary and deterministic observation-transition model.}
		\label{fig:fsm1}
	\end{figure}
	\begin{figure}[b]
	\centering
	\includegraphics[scale = 0.52]{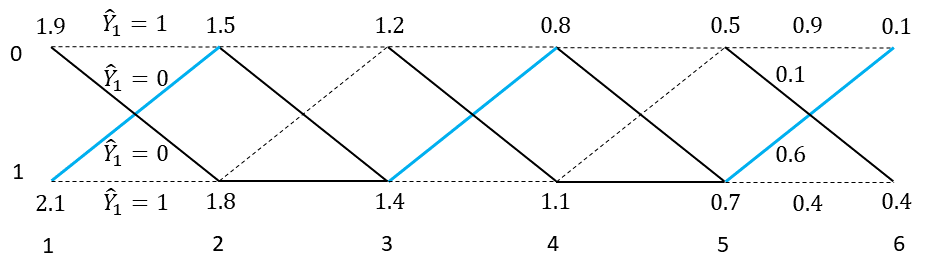}
	\caption{Unrolled observation-transition diagram for the dynamic inference example given in Section~\ref{sec:example}, with $n=6$. The value of $V^*$ and the optimal estimate at each observation are labeled. A solid branch indicates an optimal estimate, while a dashed branch indicates a non-optimal one. The three blue branches indicate the optimal estimates for dynamic inference 
		that are different from the optimal single-round estimates.}
	\label{fig:trelis_fsm1}
\end{figure}

	With this setup, the goal of dynamic inference is to minimize the expected number of wrong estimates during $n$ rounds of estimation.
	The optimal estimation strategy can be easily found by the dynamic programming procedure presented in Section~\ref{sec:sol_dp}.
	With $n=6$, the resulting values of the $V^*$ function are labeled at each observation in the unrolled observation-transition diagram in Fig.~\ref{fig:trelis_fsm1}.
	The optimal estimate at each observation is also labeled: a solid branch indicates an optimal estimate, while a dashed branch indicates a non-optimal one.
	We see that at each observation, the optimal estimate for dynamic inference can be different from that for the single-round estimation.
	For example, at $X_1=1$, $X_3=1$ and $X_5=1$, the optimal estimate for dynamic inference is $\wh Y = 0$, whereas the optimal estimate for the single-round estimation at these observations would be $\wh Y=1$ to minimize $\E[\I\{Y\neq\psi(X)\}]$.
	
	This example reveals a key difference between dynamic inference and the traditional statistical inference: in dynamic inference, the optimal estimate in each round strives to balance the loss-to-incur in that round and the loss-to-go from that round. 
	Consequently, an optimal estimation strategy may need to, at least occasionally, make non-optimal single-round estimates, in order to steer the future observations toward those with which the associated quantities of interest are easier to estimate or less costly if inaccurately estimated.

	\section{Two applications}\label{sec:app}
	Having formulated the dynamic inference problem and derived its solution, in this section we study its applications to real challenges.
	The two examples given below are simplistic, but they capture the essence of how dynamic inference can be used to model and solve various sequential and interactive estimation or prediction problems.

	\subsection{Stock trend prediction}\label{sec:app_sp}
	The first application is the prediction by big investors of the trend of stock, which could be the trend of the price of an individual stock or the index of a bundle of stocks.
	The trend, either {rising} or {falling}, statistically depends on some observable market signal, e.g.\ the supply-demand profile of the stocks under consideration.
	The prediction is sequentially made for several rounds, e.g.\ one round each day for that day's trend, each based on the past observed market signals. Once a prediction is made, it influences that day's investment decision and hence the supply-demand profile of the stocks under consideration, which will be reflected by the market signal in the next day and will further influence the next day's trend.
	
	Formally, for $n$ rounds of prediction, let $Y_i\in\{0,1\}$ be the trend in the $i$th round, which is to be predicted as $\wh Y_i$ based on the observable market signal $X_i$ in that round.
	To have the simplest observation model, we consider the situation where $X_i$'s take only two values $\{0,1\}$.
	The transition model of the next round's market signal given the signal and prediction in the current round can then be described by $P{\raisebox{-2pt}{$\scriptstyle X_{i}|X_{i-1}, \wh Y_{i-1}$}}$. 
	A stationary such model is shown in Fig.~\ref{fig:fsm_sp}.
	Additionally, the dependence of the trend on the market signal can be described by $P_{Y_i|X_i}$, e.g.\ $P(Y=1|X=0) = 0.4$ and $P(Y=1|X=1)=0.7$ where the trend positively correlates with the market signal.
	The loss function can be simply $\ell(x,y,\hat y) = \I\{y\neq \hat y\}$.
	\begin{figure}[h]
		\centering
		\includegraphics[scale = 0.56]{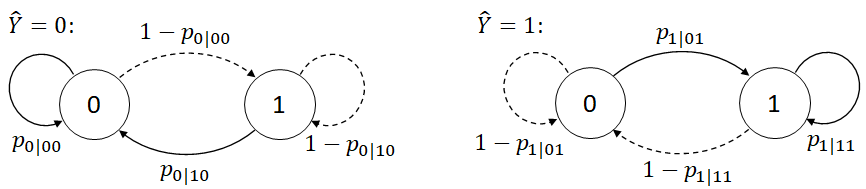}
		\caption{A stationary observation-transition model for stock trend prediction. The solid arrows represent the deterministic transition 
			$X_{i} = \wh Y_{i-1}$.}
		\label{fig:fsm_sp}
	\end{figure}
	
	\begin{figure*}
		\centering
		\includegraphics[scale = 0.52]{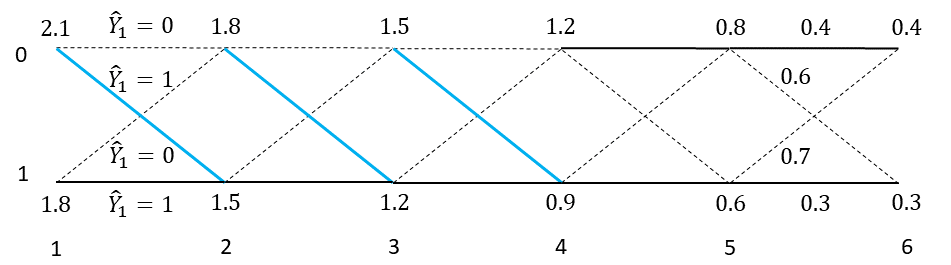}
		\caption{Unrolled observation-transition diagram for the stock trend prediction example, with $n=6$. The value of $V^*$ and the optimal prediction at each observation are labeled. A solid branch indicates an optimal prediction, while a dashed branch indicates a non-optimal one. The three blue branches indicate the optimal predictions for dynamic inference that are different from the optimal single-round predictions.}
		\label{fig:trelis_sp}
	\end{figure*}
	With all these elements specified, the problem is recognized as a dynamic inference problem similar to the example presented in Section~\ref{sec:example}, only with a more general observation-transition model. 
	Figure~\ref{fig:trelis_sp} shows the optimal estimation strategy of this example when the observation-transition model is deterministic, such that $X_i = \wh Y_{i-1}$.
	Similar to the example given in Section~\ref{sec:example}, we see from Fig.~\ref{fig:trelis_sp} that at $X_1=0$, $X_2=0$ and $X_3=0$, the optimal prediction for dynamic inference is different from the optimal single-round prediction.
	These predictions are made to steer the market signal to $1$, with which it is more certain that the trend will be rising according to $P_{Y|X}$, 
	hence smaller prediction error probability to occur.
	Only when $i\ge 4$, the optimal predictions coincide with the optimal single-round predictions, as the future prediction errors to accumulate weigh less toward the end of dynamic inference.

	\subsection{Vehicle behavior prediction}
	Another challenging problem that could be cast as dynamic inference is behavior prediction of vehicles on the road.
	For example, a desired feature of a self-driving system is to predict whether the following vehicle in the neighbor lane would yield if the ego vehicle initiates a cut-in to that lane, whenever there is a need for lane change. This task may be termed as \emph{yield prediction}, which can be sequential and interactive especially when the traffic is dense: the predicted intention determines the action to be taken by the ego vehicle, e.g.\ to turn on the blinker and initiate the cut-in when the following vehicle is predicted to yield, or not to cut-in and shoot for another gap when it is predicted not to yield; in response to the ego vehicle's behavior and according to the driving situation, the following vehicle would either slow down or accelerate, which can change the driving situation of the two vehicles and affect their subsequent behaviors; the interaction continues until the cut-in is completed due to a yield by some vehicle, or given up due to an opposite. 
	
	As in the stock trend prediction, we can formally define a dynamic inference problem for yield prediction.
	Suppose the prediction can be deconstructed into $n$ rounds. For the $i$th round, let $X_i$ denote the driving situation, which could be the positions and velocities of the vehicles under consideration, and let $Y_i\in\{{\mathsf{yield}}, {\mathsf{not\,\,yield}}\}$ represent the intention of the following vehicle.
	In the simplest setting, $X_i$ could be just the longitudinal bumper to bumper distance between the two vehicles, and the probabilistic model relating $Y_i$ to $X_i$ could be
	$
	P_{Y_i|X_i}({\mathsf{yield}} | x) = \sigma( \beta(x - d_{\rm c}))
	$
	where $\sigma(s)=\frac{1}{1+e^{-s}}$ is the logistic function, $\beta>0$ is a tunable parameter, and $d_{\rm c}$ is a critical distance that can be empirically determined.
	The observation-transition model in yield prediction is \emph{non-stationary} and depends on the design of the planner in the self-driving system. For example, when $X_i$ is small and $\wh Y_i = \mathsf{not\,\, yield}$, depending on the planner, $X_{i+1}$ could be either larger if the planner decides to increase the gap and still aims to cut-in, or smaller if the planner decides to slow down to shoot for another gap behind the following vehicle.
	Another feature of this problem is that the loss function should be \emph{contextual} and carefully designed.
	For example, when $Y_i = \mathsf{yield}$ and $\wh Y_i = \mathsf{not\,\, yield}$, the loss can be small and proportional to $X_i$, to moderately penalize a wasted chance for cut-in.
	On the other hand, when $Y_i = \mathsf{not\,\,yield}$ and $\wh Y_i = \mathsf{yield}$, the loss should be large especially when $X_i$ is small, to heavily penalize a wrong prediction that can lead to a dangerous situation.
	With all the elements specified, the problem can in principle be solved through dynamic programming as in Section~\ref{sec:sol_dp}, to minimize the overall prediction cost.

	To better illustrate the idea, three typical cases that can be encountered by the yield prediction are depicted in Fig.~\ref{fig:vbp}.
	In the first case, $X_1 > d_{\rm c}$, $Y_1 = \mathsf{yield}$, and $\wh Y_1 = \mathsf{yield}$. The ego vehicle initiates the cut-in by turning on the blinker, which results in a slow-down of the following vehicle, allowing the cut-in to be completed. There is only one round of prediction in this case.
	In the second case, $X_1 < d_{\rm c}$, $Y_1 = \mathsf{not \,\, yield}$, and $\wh Y_1 = \mathsf{yield}$. The ego vehicle initiates the cut-in by turning on the blinker, which results in an acceleration of the following vehicle, not allowing the cut-in to be completed, and leads to a dangerous driving situation. The ego vehicle then starts a second round of prediction under this situation.
	In the third case, $X_1 < d_{\rm c}$ and $\wh Y_1 = \mathsf{not \,\, yield}$. Since the ego vehicle predicts that the following vehicle will not yield if the blinker is on, it does not initiate a cut-in, and slows down to shoot for a gap behind the following vehicle. It then starts a second round of prediction, which can potentially be easier and less costly compared with the first round of prediction.
	By properly specifying the models and the loss function, a yield predictor designed under the framework of dynamic inference should enable the ego vehicle to drive in the first and the third case most of the time according to different driving situations, and avoid the behavior as in the second situation, unless it is deliberately designed to support aggressive cut-in.
	\begin{figure}[h]
		\centering
		\includegraphics[scale = 0.46]{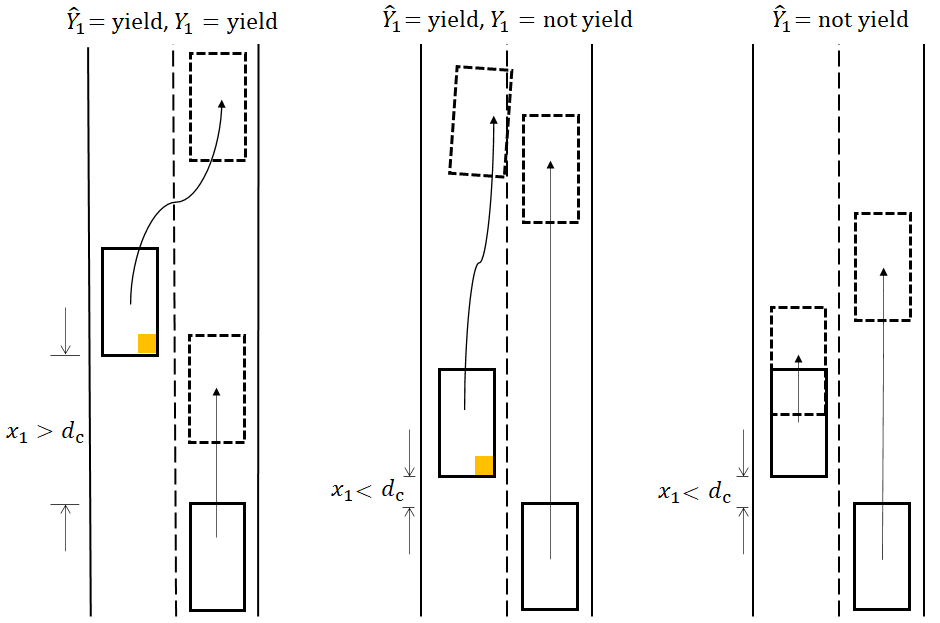}
		\caption{Three typical cases of interactive vehicle behaviors in yield prediction. 
		}
		\label{fig:vbp}
	\end{figure}

	\section{Learning for dynamic inference}\label{sec:learn_di}
	Solving the dynamic inference problem requires the knowledge of two important elements: the observation-transition model and the quantity-generation model.
	However, in many practically interesting situations, we may not have such knowledge. Instead, we either have a training dataset from which we can learn these models offline before doing inference, or we can learn them on-the-fly during inference if $Y_i$'s are revealed after each round of estimation.
	Such problems may be termed as \emph{learning for dynamic inference}, either offline or online.
	These problems can also be studied under a Bayesian framework, where the unknown models are assumed to be members of parametrized model families with certain priors, and the optimal learning rule that minimizes the expected inference loss can be mathematically derived \cite{Bayes_ldi}.  
	
	Perhaps more importantly, the problem of learning for dynamic inference can potentially serve as a meta problem for machine learning, such that almost all familiar learning problems can be cast as its special cases, examples including supervised learning, imitation learning, and reinforcement learning. 
	For instance, the \textit{offline} learning for dynamic inference can be viewed as an extension of the {behavior cloning} method in imitation learning \cite{IL_book,Grimes, Englert, BC_from_obs}, 
	in that it not only learns the demonstrator's action-generation model and state-transition model, but simultaneously learns a policy based on the learned models to minimize the overall contextual-aware imitation error. 
	As another instance, any loss function of the form $\bar\ell:\sX\times\wh\sY\rightarrow\R$ can be expressed in the integral form of \eqref{eq:def_bar_ell_comp}, with some contextual loss function $\ell:\sX\times\sY\times\wh\sY\rightarrow\R$ and probability transition kernel $P_{Y|X}$. The unknown quantity $Y$ that depends on $X$ can then be viewed as a latent variable of the loss function. With this view, any {reinforcement learning} problem \cite{RL_book} can be solved as an instance of \textit{online} learning for dynamic inference, where the quantities to be estimated are the latent variables of the loss function, and the quantity-generation model to be learned is the conditional distribution of the latent variable. 
	More detailed discussions on the connection between learning for dynamic inference and other learning problems are made in \cite{Bayes_ldi}.
	The study of dynamic inference and its learning extension thus help us gain deeper and unifying understandings of a broad spectrum of machine learning problems.
	
	\section{Multiple views of dynamic inference and related works}\label{sec:rel_work}
	The formulation of dynamic inference appears to be new, but it can be viewed from different angles, and is related to a variety of existing problems.
	\vspace{-10pt}
	\paragraph{Sequential interactive prediction as dynamic inference}
	The problems that can be most naturally formulated as dynamic inference are estimation or prediction problems in sequential and interactive settings. Traditionally some of those problems are formulated and studied using game theory \cite{beh_game_bk}.
	This type of problems become more common in recent years with widespread adoption of AI systems, e.g., they arise in behavior prediction of vehicles \cite{interac_pred_tomi19,evolvegraph20,ma_li_21},
	interactive recommendation with user feedback \cite{interac_rec_nlp}, and prediction in finance \cite{Ippoliti2017}. Dynamic inference provides a rigorous mathematical formulation of such problems, and provides an optimal solution to it.
	
	\vspace{-10pt}
	\paragraph{Dynamic inference as performative prediction}
	Dynamic inference also shares a similar spirit with a recent trend of research called performativity \cite{perf_lav19,perf_pred,perf_stateful}, where the problem is to deal with the tendency that the decision to make in optimization or prediction problems can change the underlying distribution the decision is made for. A method called repeated risk minimization is proposed to solve the performative prediction, either with \cite{perf_stateful} or without \cite{perf_pred} state transitions. The goal there is to minimize the loss in each single round of prediction, based on the distribution from the last round, and the hope is that such a method can reach a minimax equilibrium under certain conditions. On the contrary, dynamic inference aims to minimize the overall inference loss, and it explicitly considers multiple rounds of estimation and state transitions in these problems.
	
	\vspace{-10pt}
	\paragraph{Dynamic inference as imitation game}
	
	
	
	
	The formulation of dynamic inference can also be viewed as a game of imitation, where the learner drives a system with state, observes an action from a demonstrator at each encountered state, and tries to imitate the demonstrator's actions by minimizing the accumulated state-aware imitation error. When the underlying models are unknown, such a view can provide a rigorous formulation of imitation learning, both online and offline, and the optimal learning strategy is derived in \cite{Bayes_ldi}. In practice, this formulation has already been implicitly adopted by practitioners in imitation learning \cite{il_driving19}.
	
	\vspace{-10pt}
	\paragraph{MDP as dynamic inference}
	In this work, the solution to dynamic inference is derived by reformulating it to an MDP which can be solved by dynamic programming.
	Conversely, any MDP can be thought of as a dynamic inference problem, by viewing the loss function in an integral form that involves an unknown latent variable, as discussed in the previous section. The goal of MDP is then to estimate the latent variables by minimizing the overall estimation error.
	 This view also helps us to understand reinforcement learning, especially Bayesian reinforcement learning in the model-based form \cite{Strens,Poupart,BRL_book}, from a learning for dynamic inference perspective. The practical benefit of viewing MDP and reinforcement learning in this way would be an interesting research problem.

	\appendix
	
	
	
	
	
	
	\section*{Acknowledgement}
	The author is indebted to Peng Guan for many helpful discussions; the discussion with whom on imitation learning in early 2018 motivated this study.
	The author is grateful to Prof.\ Lav Varshney, for the detailed comments and many helpful suggestions, and for pointing out \cite{perf_pred} on performative prediction. The author also would like to thank Prof.\ Maxim Raginsky, for his encouragement in looking into dynamic aspects of statistical problems.
	
	\blfootnote{author email: xuaolin@gmail.com}

	\bibliography{DI_bib}

\begin{thebibliography}{10}
\providecommand{\url}[1]{#1}
\csname url@samestyle\endcsname
\providecommand{\newblock}{\relax}
\providecommand{\bibinfo}[2]{#2}
\providecommand{\BIBentrySTDinterwordspacing}{\spaceskip=0pt\relax}
\providecommand{\BIBentryALTinterwordstretchfactor}{4}
\providecommand{\BIBentryALTinterwordspacing}{\spaceskip=\fontdimen2\font plus
\BIBentryALTinterwordstretchfactor\fontdimen3\font minus
  \fontdimen4\font\relax}
\providecommand{\BIBforeignlanguage}[2]{{%
\expandafter\ifx\csname l@#1\endcsname\relax
\typeout{** WARNING: IEEEtran.bst: No hyphenation pattern has been}%
\typeout{** loaded for the language `#1'. Using the pattern for}%
\typeout{** the default language instead.}%
\else
\language=\csname l@#1\endcsname
\fi
#2}}
\providecommand{\BIBdecl}{\relax}
\BIBdecl

\bibitem{MR_SOC_notes}
M.~Raginsky, Lecture notes for ECE 555 Control of Stochastic Systems, Spring
  2019, University of Illinois at Urbana-Champaign, 2019.

\bibitem{MER20}
A.~Xu and M.~Raginsky, ``Minimum excess risk in {B}ayesian learning,''
  \emph{arXiv 2012.14868}, 2020.

\bibitem{dp_oc_vol1}
D.~P. Bertsekas, \emph{Dynamic Programming and Optimal Control, Vol. 1}.\hskip
  1em plus 0.5em minus 0.4em\relax Athena Scientific, 2017.

\bibitem{Bayes_ldi}
A.~Xu and P.~Guan, ``Bayesian learning for dynamic inference,'' \emph{arXiv
  preprint}, 2021.

\bibitem{IL_book}
T.~Osa, J.~Pajarinen, G.~Neumann, J.~A. Bagnell, P.~Abbeel, and J.~Peters, ``An
  algorithmic perspective on imitation learning,'' \emph{Foundations and Trends
  in Robotics}, vol.~7, no. 1-2, 2018.

\bibitem{Grimes}
D.~B. Grimes, D.~R. Rashid, and R.~P. Rao, ``Learning nonparametric models for
  probabilistic imitation,'' \emph{In Advances in Neural Information Processing
  Systems}, 2006.

\bibitem{Englert}
P.~Englert, A.~Paraschos, J.~Peters, and M.~P. Deisenroth, ``Probabilistic
  model-based imitation learning,'' \emph{Adaptive Behavior}, 2013.

\bibitem{BC_from_obs}
F.~Torabi, G.~Warnell, and P.~Stone, ``Behavioral cloning from observation,''
  in \emph{International Joint Conference on Artificial Intelligence}, 2018.

\bibitem{RL_book}
R.~S. Sutton and A.~G. Barto, \emph{Reinforcement Learning: An Introduction},
  2nd~ed.\hskip 1em plus 0.5em minus 0.4em\relax MIT Press, 2018.

\bibitem{beh_game_bk}
C.~Camerer, \emph{Behavioral game theory: Experiments in strategic
  interaction}.\hskip 1em plus 0.5em minus 0.4em\relax Princeton University
  Press, 2003.

\bibitem{interac_pred_tomi19}
Y.~Hu, W.~Zhan, L.~Sun, and M.~Tomizuka, ``Multi-modal probabilistic prediction
  of interactive behavior via an interpretable model,'' in \emph{IEEE
  Intelligent Vehicles Symposium}, 2019.

\bibitem{evolvegraph20}
J.~Li, F.~Yang, M.~Tomizuka, and C.~Choi, ``Evolvegraph: Multi-agent trajectory
  prediction with dynamic relational reasoning,'' in \emph{Conference on Neural
  Information Processing Systems}, 2020.

\bibitem{ma_li_21}
X.~Ma, J.~Li, M.~J. Kochenderfer, D.~Isele, and K.~Fujimura, ``Reinforcement
  learning for autonomous driving with latent state inference and
  spatial-temporal relationships,'' in \emph{IEEE International Conference on
  Robotics and Automation}, 2021.

\bibitem{interac_rec_nlp}
R.~Zhang, T.~Yu, Y.~Shen, H.~Jin, and C.~Chen, ``Text-based interactive
  recommendation via constraint-augmented reinforcement learning,'' in
  \emph{Conference on Neural Information Processing Systems}, 2019.

\bibitem{Ippoliti2017}
E.~Ippoliti, \emph{Methods and Finance: A View from Outside}.\hskip 1em plus
  0.5em minus 0.4em\relax Cham: Springer International Publishing, 2017.

\bibitem{perf_lav19}
L.~R. Varshney, N.~S. Keskar, and R.~Socher, ``Pretrained {AI} models:
  Performativity, mobility, and change,'' \emph{arXiv:1909.03290}, 2019.

\bibitem{perf_pred}
J.~Perdomo, T.~Zrnic, C.~Mendler-Dünner, and M.~Hardt, ``Performative
  prediction,'' in \emph{International Conference on Machine Learning}, 2020.

\bibitem{perf_stateful}
I.~K. Gavin~Brown, Shlomi~Hod, ``Performative prediction in a stateful world,''
  \emph{arXiv:2011.03885}, 2020.

\bibitem{il_driving19}
Y.~Pan, C.~Cheng, K.~Saigol, K.~Lee, X.~Yan, E.~Theodorou, and B.~Boots,
  ``Imitation learning for agile autonomous driving,'' \emph{International
  Journal of Robotics Research}, 2019.

\bibitem{Strens}
M.~Strens, ``A {B}ayesian framework for reinforcement learning.'' \emph{in
  Proceedings of the 17th International Conference on Machine Learning}, pp.
  943--950, 2000.

\bibitem{Poupart}
P.~Poupart, N.~Vlassis, J.~Hoey, and K.~Regan, ``An analytic solution to
  discrete {B}ayesian reinforcement learning,'' \emph{International Conference
  on Machine Learning}, 2006.

\bibitem{BRL_book}
M.~Ghavamzadeh, S.~Mannor, J.~Pineau, and A.~Tamar, \emph{Bayesian
  Reinforcement Learning: A Survey}.\hskip 1em plus 0.5em minus 0.4em\relax Now
  Foundations and Trends, 2015.

\end{thebibliography}

\end{document}